\theoremstyle{definition}
\newtheorem{theorem}{Theorem}[section]
\def\BibTeX{{\rm B\kern-.05em{\sc i\kern-.025em b}\kern-.08em
    T\kern-.1667em\lower.7ex\hbox{E}\kern-.125emX}}
\begin{document}
\title{IterL2Norm: Fast Iterative L2-Normalization\\
\thanks{This research was supported by National R\&D Program through the National Research Foundation of Korea (NRF) and Institute of Information \& communications
Technology Planning \& Evaluation (IITP) grants funded by the Korea
government (MSIT) (RS-2024-00406897 and IITP-(2024)-RS-2023-00253914). The EDA tool was supported by the IC Design Education Center (IDEC), Korea.}}

\author{ChangMin Ye$^{\dag}$, Yonguk Sim$^{\ddag}$, Youngchae Kim$^{\ddag}$, SeongMin Jin$^{\dag}$, and Doo Seok Jeong$^{\dag\ddag*}$\\
$^{\dag}$Division of Materials Science and Engineering, Hanyang University, Seoul, Republic of Korea\\
$^\ddag$Department of Semiconductor Engineering, Hanyang University, Seoul, Republic of Korea\\
\IEEEauthorblockA{*Corresponding author: $\text{dooseokj}@\text{hanyang.ac.kr}$
}}

% \author{\IEEEauthorblockN{1\textsuperscript{st} Given Name Surname}
% \IEEEauthorblockA{\textit{dept. name of organization (of Aff.)} \\
% \textit{name of organization (of Aff.)}\\
% City, Country \\
% email address or ORCID}
% \and
% \IEEEauthorblockN{2\textsuperscript{nd} Given Name Surname}
% \IEEEauthorblockA{\textit{dept. name of organization (of Aff.)} \\
% \textit{name of organization (of Aff.)}\\
% City, Country \\
% email address or ORCID}
% \and
% \IEEEauthorblockN{6\textsuperscript{th} Given Name Surname}
% \IEEEauthorblockA{\textit{dept. name of organization (of Aff.)} \\
% \textit{name of organization (of Aff.)}\\
% City, Country \\
% email address or ORCID}
% }

\maketitle

\begin{abstract}
Transformer-based large language models are a memory-bound model whose operation is based on a large amount of data that are marginally reused. Thus, the data movement between a host and accelerator likely dictates the total wall-clock time. Layer normalization is one of the key workloads in the transformer model, following each of multi-head attention and feed-forward network blocks. To reduce data movement, layer normalization needs to be performed on the same chip as the matrix-matrix multiplication engine. To this end, we introduce an iterative L2-normalization method for 1D input (IterL2Norm), ensuring fast convergence to the steady-state solution within five iteration steps and high precision, outperforming the fast inverse square root algorithm in six out of nine cases for FP32 and five out of nine for BFloat16 across the embedding lengths used in the OPT models. Implemented in 32/28nm CMOS, the IterL2Norm macro normalizes $d$-dimensional vectors, where $64 \leq d \leq 1024$, with a latency of 116-227 cycles at 100MHz/1.05V.
\end{abstract}

\begin{IEEEkeywords}
IterL2Norm, layer normalization, fast convergence, large language models
\end{IEEEkeywords}

\section{Introduction}
Large language models (LLMs) such as GPT~\cite{radford2018improving}, Gemini~\cite{team2023gemini}, and Llama~\cite{touvron2023llama} represent a recent breakthrough with profound impacts on global society. These state-of-the-art LLMs commonly adopt the transformer architecture~\cite{vaswani2017attention} that ensures high performance in natural language processing due to self-attention with an explicit working memory. In particular, the decoder-only transformer architecture employed in these LLMs ensures high-performance associative recalls in a generative manner. The decoder-only transformer consists of multiple decoders in series, each of which consists of a masked multi-head attention and feed-forward network sub-block in series. Matrix-matrix multiplication (MatMul) operations in these sub-blocks represent a major workload. Notably, each sub-block is followed by layer normalization that L2-normalizes the output for each batch.

LLMs based on transformer are memory-bound models that depend on a large amount of data but very limited reuse of them for operations (cf. convolutional neural networks as representative compute-bound models)~\cite{kim2023full}. The large amount of data needed and their limited reuse cause significant data traffic between a main memory (DRAM) and processor, so that the overall operational wall-clock time is dictated by the memory bandwidth rather than the processor performance. Thus, graphics processing units (GPUs) equipped with high bandwidth memories are widely used to accelerate LLMs. As an alternative to GPUs, various processors have recently been proposed to mainly accelerate MatMul operations and/or activation functions at a lower power than GPUs, including computing-in-memory units such as Function-In-Memory DRAM (FIMDRAM)~\cite{kwon202125}, and GDDR6-based accelerator-in-memory (AiM)~\cite{lee20221ynm}. However, most of them rely on their host for layer normalization. That is, the output from each sub-block should be sent to the main memory for layer normalization, which leads to data traffic, and thus significant latency and power consumption~\cite{kim2023full}.

For layer normalization to be performed on the same chip as MatMul processing engines, additional arithmetic operations such as square root and division need to be performed. However, the area/power overheads are probably prohibitive, as for~\cite{marchisio2023swifttron}. Alternatively, various approximations without division operations have been implemented using digital logic circuits such as~\cite{yu2022nn,wu2024pim,wang2023sole}. Unfortunately, detailed implementations and performance data are not well documented.

The layer normalization algorithm for on-chip implementation needs to avoid vanilla division and square root and to be generic to apply to various floating-point (FP) formats with low power/area overheads and operational latency. To this end, we propose a fast iterative L2-normalization algorithm (IterL2Norm) that is based on a high-dimensional dynamic system with a few fixed point. One of them represents the L2-normalized 1D vector in the hyper-space, which can be attained by appropriately setting the initial point in the hyper-space. IterL2Norm is a division and square root operation-free L2-normalization algorithm, rendering it suitable for power- and area-efficient on-chip implementations. It is based on a sold theoretical ground and applicable to various FP formats unlike previous methods tailored to specific FP formats~\cite{fast_inverse_square_root}. It highlights fast convergence toward the fixed point (L2-normalized vector) in the hyperspace within five iteration steps, and thus low operational latency.

% The defining features of IterNorm are as follows.
% \begin{itemize}
%     \item IterNorm is a division and square root operation-free L2-normalization algorithm, rendering it suitable for power- and area-efficient on-chip implementations.
%     \item It is based on a sold theoretical ground and applicable to various floating-point (FP) formats unlike previous methods tailored to specific FP formats.
%     \item It highlights fast convergence toward the fixed point (L2-normalized vector) in the hyperspace within five iteration steps, and thus low operational latency.
%     \item \textcolor{blue}{When implemented in a 32/28 nm CMOS process at 1.05 V and 100 MHz, it consumes 7.3 mW within a 1.0 mm$^2$ area in BFloat16.}
% \end{itemize}

The primary contributions of our work include
\begin{itemize}
    \item We introduce a novel L2-normalization algorithm on a solid theoretical (rather than heuristic) ground with a full derivation.
    \item We share our results of in-depth analyses on (i) its precision, convergence rate, and latency for various data lengths in FP32/FP16/BFloat16 and (ii) LLM-level performance by replacing the conventional layer normalization algorithm.  
    \item We also share our digital implementation of the IterL2Norm macro with detailed explanation and its power and area overheads in 32/28nm CMOS technology. 
\end{itemize}
The rest of this paper is organized as follows. Sec.~\ref{sec:preliminaries} explains the key theories of dynamic systems that prototype IterL2Norm. Sec.~\ref{sec:methods} elaborates IterL2Norm and the system initialization that ensures fast convergence. Sec.~\ref{sec:macro_design} proposes the IterL2Norm macro architecture. Sec.~\ref{sec:eval} evaluates IterL2Norm and the macro, and Sec.~\ref{sec:comp} compares our work with previous ones. Sec.~\ref{sec:conclusion} concludes our work.

\section{Preliminaries}
\label{sec:preliminaries}
\begin{theorem}\label{theorem1}
Let $\boldsymbol{y}$ and $\tilde{\boldsymbol{y}}$ be vectors of the same length. Let $k$ be a nonzero scalar value such that $k=\boldsymbol{y}\cdot\tilde{\boldsymbol{y}}$. Consider the following differential equation for $\tilde{\boldsymbol{y}}$ for a given $\boldsymbol{y}$.  
%%%%%%%%%%%%%%%%%%%%%%%%%%%%%%%%%%%%%%%%%%%%%%%%%%%%%%%%%%%%%%%%%%%%%%%%%%%%%%%%%%%%%%%%%%%%%%%%
\begin{equation}\label{equ:govern1}
\tau\dfrac{d\tilde{\boldsymbol{y}}}{dt}=k\boldsymbol{y} - \alpha k^2\tilde{\boldsymbol{y}}\textrm{,}
\end{equation}
%%%%%%%%%%%%%%%%%%%%%%%%%%%%%%%%%%%%%%%%%%%%%%%%%%%%%%%%%%%%%%%%%%%%%%%%%%%%%%%%%%%%%%%%%%%%%%%%
where $\alpha$ is a positive constant. For a given $\boldsymbol{y}$, $\tilde{\boldsymbol{y}}$ is initialized to $\tilde{\boldsymbol{y}}_0$ such that $k=\boldsymbol{y}\cdot\tilde{\boldsymbol{y}}_0>0$. The steady state solution to this differential equation ($\tilde{\boldsymbol{y}}_\infty$) satisfies that $\left\|\tilde{\boldsymbol{y}}_\infty\right\|^2_2 = \alpha^{-1}$ and $\tilde{\boldsymbol{y}}_\infty = \alpha^{-1/2}\boldsymbol{y}/\left\|\boldsymbol{y}\right\|_2$.  
\end{theorem}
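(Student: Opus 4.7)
The plan is to characterize the steady state algebraically first, then verify that the dynamics with a positive initial inner product actually land at it. At steady state, $d\tilde{\boldsymbol{y}}/dt=0$, so Equation~(\ref{equ:govern1}) reduces to $k_\infty \boldsymbol{y}=\alpha k_\infty^2 \tilde{\boldsymbol{y}}_\infty$. Provided $k_\infty\neq 0$, I divide by $\alpha k_\infty^2$ to obtain $\tilde{\boldsymbol{y}}_\infty=\boldsymbol{y}/(\alpha k_\infty)$. Taking the dot product with $\boldsymbol{y}$ on both sides and using $k_\infty=\boldsymbol{y}\cdot\tilde{\boldsymbol{y}}_\infty$ yields the scalar equation $k_\infty^2=\lVert\boldsymbol{y}\rVert_2^{2}/\alpha$, i.e., $k_\infty=\pm\lVert\boldsymbol{y}\rVert_2/\sqrt{\alpha}$. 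This already pins the norm, $\lVert\tilde{\boldsymbol{y}}_\infty\rVert_2^{2}=\lVert\boldsymbol{y}\rVert_2^{2}/(\alpha k_\infty)^2=\alpha^{-1}$, so the remaining work is selecting the sign and ruling out the spurious $k_\infty=0$ branch.

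To do this, I would derive a closed 1D ODE for $k(t)=\boldsymbol{y}\cdot\tilde{\boldsymbol{y}}(t)$ by taking the inner product of Equation~(\ref{equ:govern1}) with $\boldsymbol{y}$. This gives $\tau\dot{k}=k\lVert\boldsymbol{y}\rVert_2^{2}-\alpha k^{3}=k(\lVert\boldsymbol{y}\rVert_2^{2}-\alpha k^{2})$, a scalar logistic-type equation with equilibria $0$ and $\pm\lVert\boldsymbol{y}\rVert_2/\sqrt{\alpha}$. A standard phase-line inspection shows that for any initial value $k(0)>0$, the trajectory monotonically approaches $+\lVert\boldsymbol{y}\rVert_2/\sqrt{\alpha}$ and in particular never reaches $0$; this rules out the $k_\infty=0$ branch and fixes the sign, giving $k_\infty=+\lVert\boldsymbol{y}\rVert_2/\sqrt{\alpha}$ and hence $\tilde{\boldsymbol{y}}_\infty=\alpha^{-1/2}\boldsymbol{y}/\lVert\boldsymbol{y}\rVert_2$.

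The one remaining gap is that the fixed-point equation only constrains the component of $\tilde{\boldsymbol{y}}_\infty$ parallel to $\boldsymbol{y}$, so I must show that any initial perpendicular component decays. I would decompose $\tilde{\boldsymbol{y}}(t)=a(t)\boldsymbol{y}+\boldsymbol{w}(t)$ with $\boldsymbol{w}\cdot\boldsymbol{y}=0$; substituting into Equation~(\ref{equ:govern1}) and projecting onto the subspace orthogonal to $\boldsymbol{y}$ gives the decoupled linear equation $\tau\dot{\boldsymbol{w}}=-\alpha k(t)^{2}\boldsymbol{w}$. Because $k(t)$ is bounded away from $0$ for all $t\geq 0$ (from the phase-line argument above) and in fact converges to $\lVert\boldsymbol{y}\rVert_2/\sqrt{\alpha}$, the coefficient $\alpha k(t)^{2}$ is eventually bounded below by a positive constant, so $\lVert\boldsymbol{w}(t)\rVert_2$ decays to $0$ exponentially. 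Combining this with $a(t)\to k_\infty/\lVert\boldsymbol{y}\rVert_2^{2}=1/(\sqrt{\alpha}\,\lVert\boldsymbol{y}\rVert_2)$ closes the argument.

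The main obstacle is the sign/branch selection and ruling out the degenerate equilibrium $k_\infty=0$; handling it cleanly requires passing from the vector ODE to the scalar ODE for $k(t)$ and using the positivity of the initial inner product as an invariant. Once that reduction is in hand, the rest is routine: the norm identity falls out of the algebraic fixed-point relation, and the perpendicular component is killed by an elementary linear decay estimate.
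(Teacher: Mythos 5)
Your proposal is correct and follows essentially the same route as the paper's proof: both reduce the vector dynamics to the scalar equation $\tau\dot{k}=k\lVert\boldsymbol{y}\rVert_2^{2}-\alpha k^{3}$ by taking the inner product with $\boldsymbol{y}$, use the phase line to select the stable fixed point $k_\infty=+\alpha^{-1/2}\lVert\boldsymbol{y}\rVert_2$ from the sign of $k_0$, and then read off $\tilde{\boldsymbol{y}}_\infty=\boldsymbol{y}/(\alpha k_\infty)$ from the stationarity condition. The only substantive difference is that you explicitly decompose $\tilde{\boldsymbol{y}}=a\boldsymbol{y}+\boldsymbol{w}$ and prove the orthogonal component $\boldsymbol{w}$ decays (using that $k(t)$ stays bounded away from zero), whereas the paper simply asserts that the steady state is $\boldsymbol{y}/(\alpha k_\infty)$; since the theorem only assumes $k_0>0$ and not that $\tilde{\boldsymbol{y}}_0$ is parallel to $\boldsymbol{y}$, your extra step actually closes a small gap in the paper's argument. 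Conversely, the paper obtains the norm identity $\lVert\tilde{\boldsymbol{y}}_\infty\rVert_2^{2}=\alpha^{-1}$ via a second projection onto $\tilde{\boldsymbol{y}}$, while you get it directly from the algebraic fixed-point relation; both are equally valid.
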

\begin{proof}
Given that $k=\boldsymbol{y}\cdot\tilde{\boldsymbol{y}}$, the inner product of each side of Eq.~\eqref{equ:govern1} and $\boldsymbol{y}$ yields
%%%%%%%%%%%%%%%%%%%%%%%%%%%%%%%%%%%%%%%%%%%%%%%%%%%%%%%%%%%%%%%%%%%%%%%%%%%%%%%%%%%%%%%%%%%%%%%%
\begin{equation*}
\tau\left(\dfrac{dk}{dt}-\tilde{\boldsymbol{y}}\cdot \dfrac{d\boldsymbol{y}}{dt}\right)=\tau\dfrac{dk}{dt}=k\left\|\boldsymbol{y}\right\|_2^2 - \alpha k^3\textrm{.}
\end{equation*}
%%%%%%%%%%%%%%%%%%%%%%%%%%%%%%%%%%%%%%%%%%%%%%%%%%%%%%%%%%%%%%%%%%%%%%%%%%%%%%%%%%%%%%%%%%%%%%%%
For a positive $\alpha$, this dynamic system holds one unstable fixed point ($k = 0$) and two stable fixed points ($k=\pm\alpha^{-1/2}\left\|\boldsymbol{y}\right\|_2$). Therefore, the steady state $k$ ($k_\infty$) is determined by the initial $k$ ($k_0=\boldsymbol{y}\cdot\tilde{\boldsymbol{y}}_0$) such that $k_\infty = \alpha^{-1/2}\left\|\boldsymbol{y}\right\|_2$ if $k_0>0$ and $k_\infty = -\alpha^{-1/2}\left\|\boldsymbol{y}\right\|_2$ if $k_0<0$. Because we consider only $\tilde{\boldsymbol{y}}_0$ that leads to $k_0>0$, we have 
%%%%%%%%%%%%%%%%%%%%%%%%%%%%%%%%%%%%%%%%%%%%%%%%%%%%%%%%%%%%%%%%%%%%%%%%%%%%%%%%%%%%%%%%%%%%%%%%
\begin{equation}\label{equ:ss_k}
k_\infty = \alpha^{-1/2}\left\|\boldsymbol{y}\right\|_2\textrm{.}
\end{equation}
%%%%%%%%%%%%%%%%%%%%%%%%%%%%%%%%%%%%%%%%%%%%%%%%%%%%%%%%%%%%%%%%%%%%%%%%%%%%%%%%%%%%%%%%%%%%%%%%

The inner product of each side of Eq.~\eqref{equ:govern1} and $\tilde{\boldsymbol{y}}$ yields
%%%%%%%%%%%%%%%%%%%%%%%%%%%%%%%%%%%%%%%%%%%%%%%%%%%%%%%%%%%%%%%%%%%%%%%%%%%%%%%%%%%%%%%%%%%%%%%%
\begin{equation*}
\dfrac{\tau}{2}\dfrac{d\left\|\tilde{\boldsymbol{y}}\right\|_2^2}{dt}=k^2 - \alpha k^2\left\|\tilde{\boldsymbol{y}}\right\|_2^2\textrm{.}
\end{equation*}
%%%%%%%%%%%%%%%%%%%%%%%%%%%%%%%%%%%%%%%%%%%%%%%%%%%%%%%%%%%%%%%%%%%%%%%%%%%%%%%%%%%%%%%%%%%%%%%%
In the steady state, the left-hand side is zero, so that we have  $\left\|\tilde{\boldsymbol{y}}_\infty\right\|_2^2=\alpha^{-1}$. Additionally, the steady state solution to Eq.~\eqref{equ:govern1} is $\tilde{\boldsymbol{y}}_\infty = \boldsymbol{y}/\alpha k_\infty$. Using Eq.~\eqref{equ:ss_k}, we eventually have $\tilde{\boldsymbol{y}}_\infty = \alpha^{-1/2}\boldsymbol{y}/\left\|\boldsymbol{y}\right\|_2$.
\end{proof} 

\section{Proposed method}\label{sec:methods}
\subsection{Iterative normalization method}
Layer normalization for a given vector $\boldsymbol{x}\in\mathbb{R}^d$ involves the following sequential steps:\\
\textbf{Step 1:} Shifting the mean of $\boldsymbol{x}$ ($\overline{\boldsymbol{x}}$) to zero, $\boldsymbol{y}\leftarrow\boldsymbol{x}-\overline{\boldsymbol{x}}$,\\
\textbf{Step 2:} Normalizing $\boldsymbol{y}$ by the standard deviation of $\boldsymbol{y}$, i.e., $\sigma_y$, $\hat{\boldsymbol{y}}\leftarrow\boldsymbol{y}/\sigma_y$,\\
\textbf{Step 3:} Scaling and shifting $\hat{\boldsymbol{y}}$, $\boldsymbol{z}\leftarrow\boldsymbol{\gamma}\cdot\hat{\boldsymbol{y}}+\boldsymbol{\beta}$.\\
Because $\sigma_y=d^{-1/2}\left\|\boldsymbol{y}\right\|_2$, \textbf{Step 2} is expressed as $\hat{\boldsymbol{y}}\leftarrow d^{1/2}\boldsymbol{y}/\left\|\boldsymbol{y}\right\|_2$, which is equivalent to L2-normalizing $\boldsymbol{y}$ and subsequently scaling it by multiplying $d^{1/2}$. \textbf{Step 2} is the only step involving division operations which are computationally expensive. We can replace this costly step by the iterative method supported by \textbf{Theorem}~\ref{theorem1}. \textbf{Theorem}~\ref{theorem1} for $\alpha=1$ explains that the following differential equation,
%%%%%%%%%%%%%%%%%%%%%%%%%%%%%%%%%%%%%%%%%%%%%%%%%%%%%%%%%%%%%%%%%%%%%%%%%%%%%%%%%%%%%%%%%%%%%%%%
\begin{equation}\label{equ:govern2}
\tau\dfrac{d\tilde{\boldsymbol{y}}}{dt}=k\boldsymbol{y} - k^2\tilde{\boldsymbol{y}}\textrm{, where }k=\boldsymbol{y}\cdot\tilde{\boldsymbol{y}}\textrm{,}
\end{equation}
%%%%%%%%%%%%%%%%%%%%%%%%%%%%%%%%%%%%%%%%%%%%%%%%%%%%%%%%%%%%%%%%%%%%%%%%%%%%%%%%%%%%%%%%%%%%%%%%
has the steady state solution $\tilde{\boldsymbol{y}}_\infty=\boldsymbol{y}/\left\|\boldsymbol{y}\right\|_2$. Thus, we can evaluate $\tilde{\boldsymbol{y}}$ for $\boldsymbol{y}$ by solving Eq.~\eqref{equ:govern2}. We solve Eq.~\eqref{equ:govern2} by approximating it to a recursive form using the Euler method as follows.
%%%%%%%%%%%%%%%%%%%%%%%%%%%%%%%%%%%%%%%%%%%%%%%%%%%%%%%%%%%%%%%%%%%%%%%%%%%%%%%%%%%%%%%%%%%%%%%%
\begin{equation}\label{equ:approx1}
\tilde{\boldsymbol{y}}_{i+1} = \left(1-\lambda  k_i^2\right)\tilde{\boldsymbol{y}}_{i} + \lambda k_i\boldsymbol{y}\textrm{,}
\end{equation}
%%%%%%%%%%%%%%%%%%%%%%%%%%%%%%%%%%%%%%%%%%%%%%%%%%%%%%%%%%%%%%%%%%%%%%%%%%%%%%%%%%%%%%%%%%%%%%%%
where $\lambda=\Delta t/\tau$, and $k_i=\boldsymbol{y}\cdot\tilde{\boldsymbol{y}}_i$. Note that the subscript $i$ for $\tilde{\boldsymbol{y}}$ and $k$ denotes the $i$th iteration step. The timestep width is denoted by $\Delta t$. The value $\tilde{\boldsymbol{y}}_{i+1}$ in Eq.~\eqref{equ:approx1} is repeatedly calculated until it reaches its steady state, yielding $\tilde{\boldsymbol{y}}_\infty$. For all $i$, $\tilde{\boldsymbol{y}}_{i}$ is parallel to $\boldsymbol{y}$, so that we replace $\tilde{\boldsymbol{y}}_{i}$ by $a_i\boldsymbol{y}$, leading to $k_i=a_i\left\|\boldsymbol{y}\right\|_2^2$ Thus, Eq.~\eqref{equ:approx1} is converted into a simple scalar equation.
%%%%%%%%%%%%%%%%%%%%%%%%%%%%%%%%%%%%%%%%%%%%%%%%%%%%%%%%%%%%%%%%%%%%%%%%%%%%%%%%%%%%%%%%%%%%%%%%
\begin{equation}\label{equ:approx2}
\Delta a = a_{i+1} - a_i = \lambda\left\|\boldsymbol{y}\right\|_2^2 a_i\left(1-\left\|\boldsymbol{y}\right\|_2^2a_i^2\right)\textrm{,}
\end{equation}
%%%%%%%%%%%%%%%%%%%%%%%%%%%%%%%%%%%%%%%%%%%%%%%%%%%%%%%%%%%%%%%%%%%%%%%%%%%%%%%%%%%%%%%%%%%%%%%%
which asymptotically converges towards $a_\infty=1/\left\|\boldsymbol{y}\right\|_2$ with a positive $a_0$ and sufficiently small $\lambda$. We refer to this L2-normalization method with the replacement of \textbf{Step 2} by this iterative normalization as IterL2Norm. The pseudocode for IterL2Norm-based layer normalization is shown in \textbf{Algorithm}~\ref{algo:ibn}.

%%%%%%%%%%%%%%%%%%%%%%%%%%%%%%%%%%%%%%%%%%%%%%%%%%%%%%%%%%%%%%%%%%%%%%%%
\begin{algorithm}[tb!]
\normalsize
\SetAlgoLined
\SetAlCapNameFnt{\normalsize}
\SetAlCapFnt{\normalsize}
\textbf{Input:} input vector $\boldsymbol{x}$; update-rate $\lambda$; max-tolerated error $\delta_\textrm{max}$; scale and shift parameters ($\boldsymbol{\gamma}$ and $\boldsymbol{\beta}$)\\
\textbf{Output:} output vector $\boldsymbol{z}$\\
\textbf{Initialization: } $\Delta a\gets \delta_0 \left(>\delta_\textrm{max}\right)$; $a\gets a_0$\\
$\overline{x}\gets d^{-1}\sum_{i=1}^dx_i$\\
$\boldsymbol{y}\gets \boldsymbol{x} - \overline{x}$\\
$m\gets \left\|\boldsymbol{y}\right\|_2^2$\\
\While{$\Delta a>\delta_\textrm{max}$}{
$\Delta a\gets \lambda ma \left(1-ma^2\right)$\\
$a\gets a + \Delta a$}
$\hat{\boldsymbol{y}}\gets d^{1/2}a\boldsymbol{y}$\\
$\boldsymbol{z}\gets \boldsymbol{\gamma}\hat{\boldsymbol{y}}+\boldsymbol{\beta}$
\caption{IterL2Norm-based layer normalization.}
\label{algo:ibn}
\end{algorithm}
%%%%%%%%%%%%%%%%%%%%%%%%%%%%%%%%%%%%%%%%%%%%%%%%%%%%%%%%%%%%%%%%%%%%%%%%

%%%%%%%%%%%%%%%%%%%%%%%%%%%%%%%%%%%%%%%%%%%%%%%%%%%%%%%%%%%%%%%%%%%%%%%%%%%%%%
\begin{figure*}[tbh!]
\centerline{\includegraphics[width=0.95\linewidth]{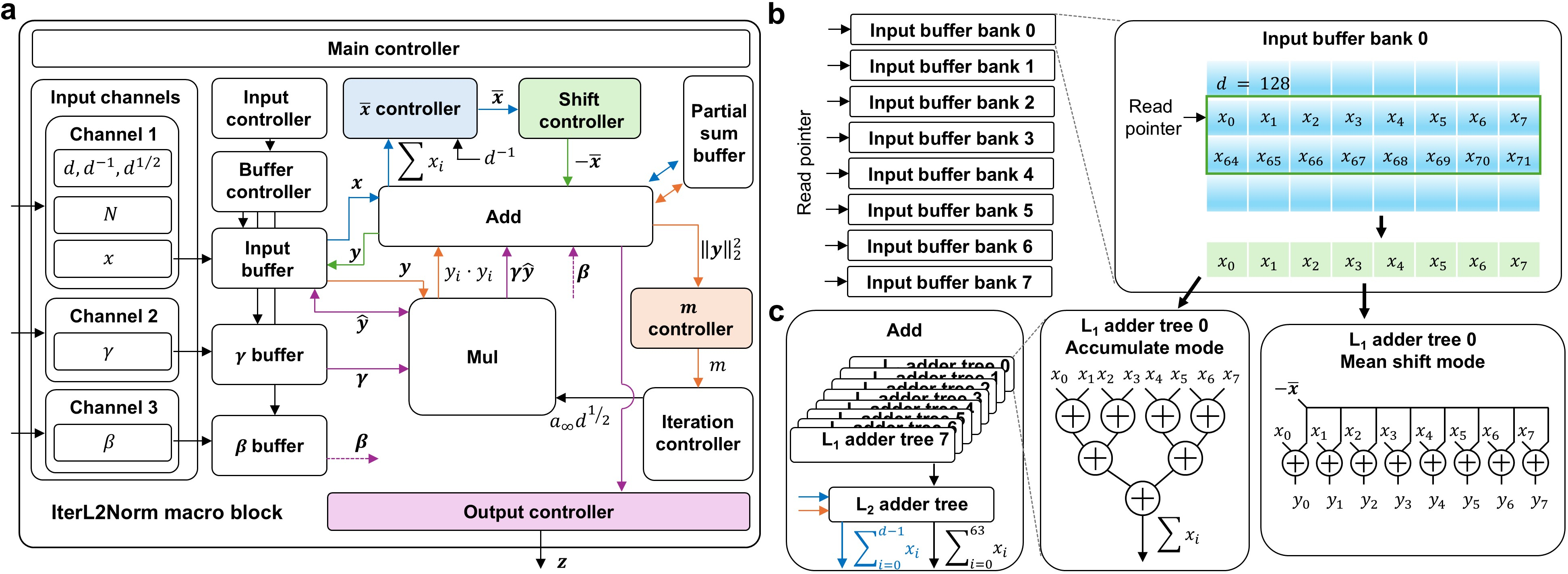}}
\caption{(\textbf{a}) Architecture of the IterL2Norm macro. (\textbf{b}) Data organization in the Input buffer. (\textbf{c}) Block diagram of the Add block equipped with total nine 8-input adder trees.}\label{fig:Macro_overview}
\end{figure*}
%%%%%%%%%%%%%%%%%%%%%%%%%%%%%%%%%%%%%%%%%%%%%%%%%%%%%%%%%%%%%%%%%%%%%%%%%%%%%%

\subsection{Initialization and update-rate setting}
The more the iteration steps for IterL2Norm, the larger the wall-clock time. To shorten the iteration, we should use the initial value $a_0$ close to $a_{\infty}$$(=1/\left\|\boldsymbol{y}\right\|_2)$
and update-rate $\lambda$ such that (i) $\lambda$ is sufficiently large for $a_\infty$ to be attained with the minimum iteration steps (fast convergence) but (ii) sufficiently small to avoid an intolerable error in approximation.\\
\textbf{Initialization of $\boldsymbol{a}$: } Let $m$ be $\left\|\boldsymbol{y}\right\|_2^2$, which is once evaluated for IterL2Norm as shown in \textbf{Algorithm}~\ref{algo:ibn}. We initialize $a$ using the exponent of $m$, $E\left(m\right)$, as follows.
%%%%%%%%%%%%%%%%%%%%%%%%%%%%%%%%%%%%%%%%%%%%%%%%%%%%%%%%%%%%%%%%%%%%%%%%%%%%%%%%%%%%%%%%%%%%%%%%
\begin{equation}\label{equ:a_init}
a_0 = 2^{-\left(E\left(m\right)-\text{bias}+1\right)/2}\textrm{,}
\end{equation}
%%%%%%%%%%%%%%%%%%%%%%%%%%%%%%%%%%%%%%%%%%%%%%%%%%%%%%%%%%%%%%%%%%%%%%%%%%%%%%%%%%%%%%%%%%%%%%%%
where bias depends on the data format, e.g., $\text{bias}=127$ for FP32 and BFloat16 and $\text{bias}=15$ for FP16. Because $a_{\infty}=m^{-1/2}$, we can express $a_{\infty}$ as follows.
%%%%%%%%%%%%%%%%%%%%%%%%%%%%%%%%%%%%%%%%%%%%%%%%%%%%%%%%%%%%%%%%%%%%%%%%%%%%%%%%%%%%%%%%%%%%%%%%
\begin{equation*}
a_{\infty} = \text{Significand}\left(m\right)^{-1/2}\cdot2^{-\left(E\left(m\right)-\text{bias}\right)/2}\textrm{,}
\end{equation*}
%%%%%%%%%%%%%%%%%%%%%%%%%%%%%%%%%%%%%%%%%%%%%%%%%%%%%%%%%%%%%%%%%%%%%%%%%%%%%%%%%%%%%%%%%%%%%%%%
where Significand$(m)$ denotes the significand of $m$, which satisfies $1\leq\text{Significand}(m)<2$. Therefore, we have $0.7<a_0/a_{\infty}<1$, implying that $a_0$ is already close to $a_{\infty}$ in so much as the distance is smaller than 30\% of $a_{\infty}$. Further, the evaluation of $a_0$ involves one addition, one subtraction, and one bit-shift operation only.\\
\textbf{Update-rate $\lambda$: }Eq.~\eqref{equ:approx2} can be expressed as the following differential equation.
%%%%%%%%%%%%%%%%%%%%%%%%%%%%%%%%%%%%%%%%%%%%%%%%%%%%%%%%%%%%%%%%%%%%%%%%%%%%%%%%%%%%%%%%%%%%%%%%
\begin{equation}\label{equ:a_continuous}
\tau\dfrac{da}{dt} = -m^2a\left(a^2-1/m\right)\textrm{,}
\end{equation}
%%%%%%%%%%%%%%%%%%%%%%%%%%%%%%%%%%%%%%%%%%%%%%%%%%%%%%%%%%%%%%%%%%%%%%%%%%%%%%%%%%%%%%%%%%%%%%%%
where $m=\left\|\boldsymbol{y}\right\|_2^2$. There exists the analytical solution to Eq.~\eqref{equ:a_continuous}.
%%%%%%%%%%%%%%%%%%%%%%%%%%%%%%%%%%%%%%%%%%%%%%%%%%%%%%%%%%%%%%%%%%%%%%%%%%%%%%%%%%%%%%%%%%%%%%%%
\begin{equation}\label{equ:a_sol}
a = a_0\left[\left(1-ma_0^2\right)e^{-2mt/\tau}+ma_0^2\right]^{-1/2}\textrm{.}
\end{equation}
%%%%%%%%%%%%%%%%%%%%%%%%%%%%%%%%%%%%%%%%%%%%%%%%%%%%%%%%%%%%%%%%%%%%%%%%%%%%%%%%%%%%%%%%%%%%%%%%
Because we consider discrete iteration steps, we replace $t$ in Eq.~\eqref{equ:a_sol} by $n\Delta t$ with non-negative integer $n$ that indicates the iteration step index. Subsequently, by introducing $\lambda\left(=\Delta t/\tau\right)$, we have
%%%%%%%%%%%%%%%%%%%%%%%%%%%%%%%%%%%%%%%%%%%%%%%%%%%%%%%%%%%%%%%%%%%%%%%%%%%%%%%%%%%%%%%%%%%%%%%%
\begin{equation}\label{equ:a_sol2}
a = a_0\left[\left(1-ma_0^2\right)e^{-2mn\lambda}+ma_0^2\right]^{-1/2}\textrm{.}
\end{equation}
%%%%%%%%%%%%%%%%%%%%%%%%%%%%%%%%%%%%%%%%%%%%%%%%%%%%%%%%%%%%%%%%%%%%%%%%%%%%%%%%%%%%%%%%%%%%%%%%
The convergence rate is determined by the exponent on the right-hand side of Eq.~\eqref{equ:a_sol2}. 
For fast convergence, the exponential term should fall below a tolerable error value $\delta_c \sim 0$ within a few iteration steps $n_c$, leading to the following inequality, $\lambda > -\left(2mn_c\right)^{-1}\ln\delta_c$. We set $\delta_c$ and $n_c$ to $10^{-3}$ and $5$, respectively, so that we have $\lambda > 0.69m^{-1}$. As such, the calculation of $m^{-1}$ needs a division operation, which is avoided in IterL2Norm. Because the exponent of $m$, i.e., $E\left(m\right)$ is known, the range of $m^{-1}$ is readily available, $0.5\cdot2^{-\left(E\left(m\right)-\text{bias}\right)}<m^{-1}\leq 2^{-\left(E\left(m\right)-\text{bias}\right)}$. Therefore, we approximate the condition of $\lambda$ for $a$ to converge within $n_c\left(=5\right)$ iteration steps.
%%%%%%%%%%%%%%%%%%%%%%%%%%%%%%%%%%%%%%%%%%%%%%%%%%%%%%%%%%%%%%%%%%%%%%%%%%%%%%%%%%%%%%%%%%%%%%%%
\begin{equation}\label{equ:rate_init}
\lambda > 0.345\cdot 2^{-\left(E\left(m\right)-\text{bias}\right)}\textrm{.}
\end{equation}
%%%%%%%%%%%%%%%%%%%%%%%%%%%%%%%%%%%%%%%%%%%%%%%%%%%%%%%%%%%%%%%%%%%%%%%%%%%%%%%%%%%%%%%%%%%%%%%%
This calculation needs one subtraction and one multiplication operation only. 

\section{IterL2Norm macro design}\label{sec:macro_design}

% %%%%%%%%%%%%%%%%%%%%%%%%%%%%%%%%%%%%%%%%%%%%%%%%%%%%%%%%%%%%%%%%%%%%%%%%%%%%%%%
% \begin{figure}[tb]
% \centerline{\includegraphics[width=\linewidth]{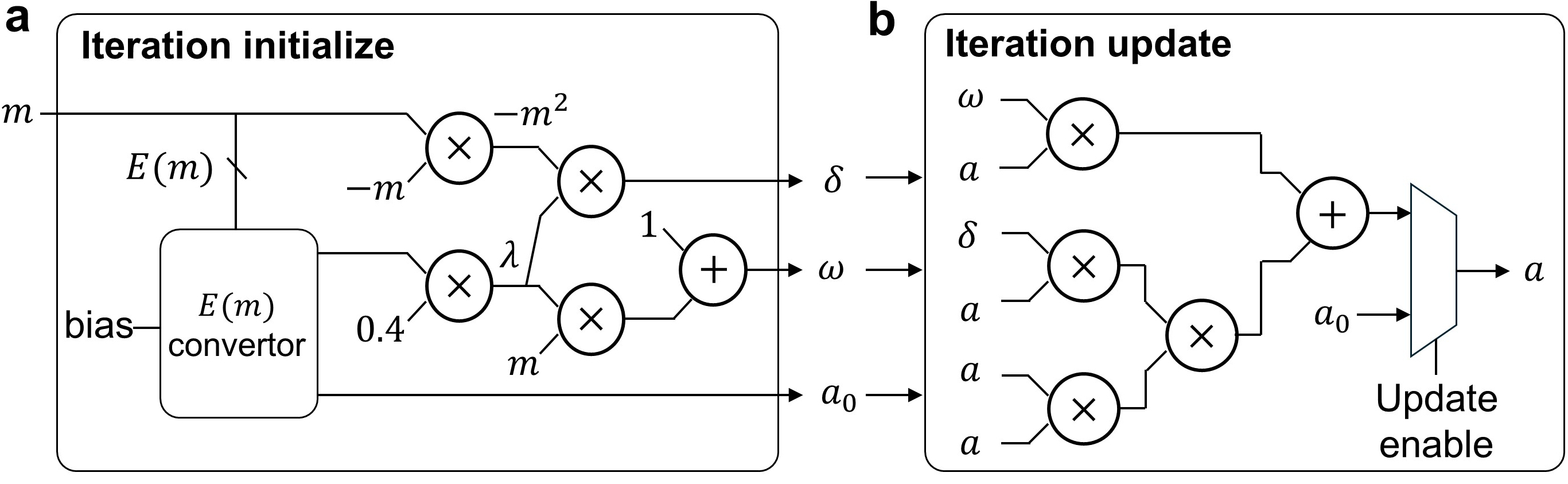}}
% \caption{Architecture of (\textbf{a}) the initialize, (\textbf{b}) the update module in the iteration controller, and (\textbf{c}) the timing diagram of the iteration controller.}\label{fig:iteration_controller}
% \end{figure}
% %%%%%%%%%%%%%%%%%%%%%%%%%%%%%%%%%%%%%%%%%%%%%%%%%%%%%%%%%%%%%%%%%%%%%%%%%%%%%%

%%%%%%%%%%%%%%%%%%%%%%%%%%%%%%%%%%%%%%%%%%%%%%%%%%%%%%%%%%%%%%%%%%%%%%%%%%%%%%%
\begin{figure}[tb]
\centerline{\includegraphics[width=0.95\linewidth]{}}
\caption{Architecture of (\textbf{a}) the initialize and (\textbf{b}) the update modules in the iteration controller.}\label{fig:iteration_controller}
\end{figure}
%%%%%%%%%%%%%%%%%%%%%%%%%%%%%%%%%%%%%%%%%%%%%%%%%%%%%%%%%%%%%%%%%%%%%%%%%%%%%%
The IterL2Norm macro implements the IterL2Norm-based layer normalization algorithm for a $d$-long input vector $\boldsymbol{x}=\left[x_0,x_1,\cdots,x_{d-1}\right]$ with scale parameters $\boldsymbol{\gamma}=\left[\gamma_{0},\gamma_{1},\cdots,\gamma_{d-1}\right]$ and shift parameters $\boldsymbol{\beta}=\left[\beta_0,\beta_1,\cdots,\beta_{d-1}\right]$, which outputs layer-normalized input $\boldsymbol{z} = \left[z_0,z_1,\cdots,z_{d-1}\right]$. Fig.~\ref{fig:Macro_overview}\textbf{a} shows a block diagram of the IterL2Norm macro proposed. The Input buffer of eight parallel banks ($n_b=8$) buffers a $d$-long input vector, and thus the input length $d$ is limited by the buffer size (Fig.~\ref{fig:Macro_overview}\textbf{b}). Because each bank can stores $16\times 8$ input elements ($h_b=16$ and $w_b=8$), the IterL2Norm macro can handle $d=1024$ maximally for a single input, i.e., $d_\text{max}=1024$.
A $d$-long input vector $\boldsymbol{x}$ is buffered over
multiple banks such that, in a bank $b$ out of total eight banks ($n_b=8$), its row $i$ stores $x[w_b(b+n_bi):w_b(b+n_bi+1)-1]$ as illustrated in Fig.~\ref{fig:Macro_overview}\textbf{b}. Because eight parallel banks share a read pointer, $x[n_bw_bi:n_bw_b(i+1)-1]$ is read at a time.
Instead, multiple ($\lfloor d_\text{max}/d\rfloor$) input vectors can be buffered and sequentially normalized. Note that, to maintain $d_\text{max}=1024$ for FP32/16 and BFloat16, the IterL2Norm macro for FP32 uses the Input buffer twice as large as that for FP16 and BFloat16. Additionally, the Mul and Add blocks are tailored to each data format by using format-specific multipliers and adders but with the same latency of two clock cycles.  

This macro normalizes the input vector using the following sequence.\\
\textbf{Initialization: }The macro is initialized with input length $d$ and number of input vectors.\\
\textbf{Data loading: }The Input, $\gamma$, and $\beta$ buffers are loaded with input vector(s), and scale and shift parameters, respectively, through the input channels. This is controlled by the input and main controllers.\\
\textbf{Mean-shift: }The $\overline{\boldsymbol{x}}$ controller retrieves the input vector from the Input buffer to calculate its element-wise sum in the Add block. The Add block is equipped with eight 8-input L$_1$ adder trees and one 8-input L$_2$ adder tree, which can add 64 input elements to yield the sum of the partial input at a time (Fig.~\ref{fig:Macro_overview}\textbf{c}). This sum is buffered in the Partial sum buffer alongside the sum values for the previous partial inputs. This is repeated $\lceil d/64\rceil$ times to collect total $\lceil d/64\rceil$ sum values in the Partial sum buffer. They are sent to the Add block to acquire the sum of the whole input vector.
The sum is subsequently multiplied by $d^{-1}$ (pre-stored in the memory), eventually outputting the mean $\overline{\boldsymbol{x}}$. The Shift controller then shifts the mean of $\boldsymbol{x}$ to zero by subtracting $\overline{\boldsymbol{x}}$ from $\boldsymbol{x}$ and rewrites the mean-shifted vector, $\boldsymbol{y}=\boldsymbol{x} - \overline{\boldsymbol{x}}$, into the Input buffer.\\
\textbf{Inner product of $\boldsymbol{y}$ with itself: }The $m$ controller reads the mean-shifted vector $\boldsymbol{y}$ from the Input buffer and sends it to the Mul block (equipped with 64 multipliers). The resulting vector is sent to the Add block that outputs the inner product of a partial vector of $\boldsymbol{y}$. This result is buffered in the Partial sum buffer. This is repeated $\lceil d/64\rceil$ times to calculate $m=\left\|\boldsymbol{y}\right\|_2^2$.\\
\textbf{Iteration: }The Iteration controller initializes $a_0$ using Eq.~\eqref{equ:a_init} and sets the update rate $\lambda$ using Eq.~\eqref{equ:rate_init} (Fig.~\ref{fig:iteration_controller}\textbf{a}). It then iteratively updates $a$ using Eq.~\eqref{equ:approx2} to attain its steady-state value $a_\infty$ (Fig.~\ref{fig:iteration_controller}\textbf{b}). The number of iteration steps $n_c$ is a programmable variable.\\
\textbf{Output: } The Output controller reads the mean-shifted vector $\boldsymbol{y}$ from the Input buffer and sends it to the Mul block with the product of $a_\infty$ and pre-stored $d^{1/2}$ to obtain the L2-normalization result $\hat{\boldsymbol{y}}$. This vector is re-sent to the Mul block with the scale parameters buffered in the $\gamma$ block and the scaled vector to the Add block with the shift parameters buffered in the $\beta$ block to finally obtain the layer-normalization result $\boldsymbol{z}$ for a given input $\boldsymbol{x}$. 

\section{Evaluation}\label{sec:eval}
\subsection{Computational precision and convergence rate}
%%%%%%%%%%%%%%%%%%%%%%%%%%%%%%%%%%%%%%%%%%%%%%%%%%%%%%%%%%%%%%%%%%%%%%%%%%%%
\begin{figure}[tb]
\centerline{\includegraphics[width=0.95\linewidth]{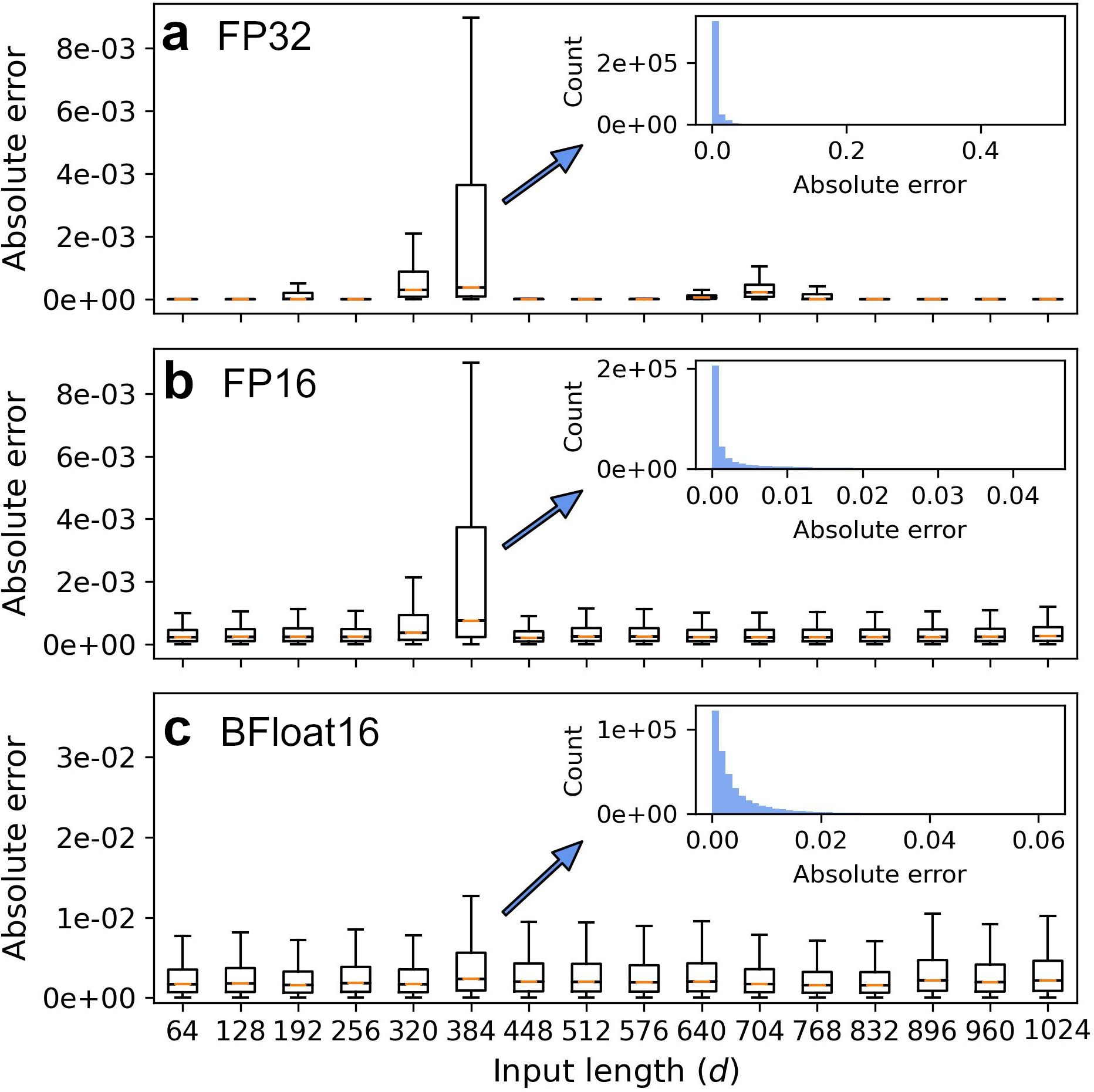}}
\caption{Approximation precision of IterL2Norm for various input lengths $d$ in (\textbf{a}) FP32, (\textbf{b}) FP16, and (\textbf{c}) BFloat16. The insets show the distribution of errors for $d=384$ over 1,000 input vectors.}\label{fig:error}
\end{figure}
%%%%%%%%%%%%%%%%%%%%%%%%%%%%%%%%%%%%%%%%%%%%%%%%%%%%%%%%%%%%%%%%%%%%%%%%%%%%

We evaluated the performance of the IterL2Norm macro implemented in a Xilinx Virtex-7 FPGA.
We applied the IterL2Norm-based normalization to random vectors of different lengths $(64\leq d\leq 1024)$ in FP32/FP16/BFloat16. For each length and each data format, we used 1,000 random vectors sampled from a uniform distribution in the range $(-1, 1)$ as input vectors. The number of iteration steps was set to 5. We used the absolute deviation of our results from the ground truth (absolute error) as a measure of computational precision. The ground truth values were calculated by applying the layer-normalization function in PyTorch (1.12.1)~\cite{paszke2019pytorch} to the same random vectors using a CPU.
Fig.~\ref{fig:error} shows the evaluation results.
The average (maximum) absolute errors for FP32, FP16, and BFloat16 are $2.23 \times 10^{-4} (5.0\times10^{-1})$, $5.26 \times 10^{-4} (4.9\times10^{-1})$, and $3.07 \times 10^{-3} (6.8\times10^{-1})$, respectively. The maximum error cases marginally occurred as shown in the histograms in the insets of Fig.~\ref{fig:error}. 

LLMs use long embedding vectors, as seen in the OPT models, with the largest (OPT-175B) utilizing 12,288-dimensional embeddings~\cite{zhang2022opt}. We further evaluated the precision of IterL2Norm for the embedding lengths used in the OPT models ($768\leq d\leq 12,288$) and compared with the layer normalization method based on the fast inverse square root (FISR) algorithm~\cite{fast_inverse_square_root}. Since FISR is designed for FP formats with an 8b exponent, we limit our comparison to FP32 and BFloat16. The results are shown in Table~\ref{tab:error_comparison_fisr}. In FP32, IterL2Norm outperforms the FISR-based method in terms of average precision in six out of nine cases while, in BFloat16, it does so in five out of nine cases.

%%%%%%%%%%%%%%%%%%%%%%%%%%%%%%%%%%%%%%%%%%%%%%%%%%%%%%%%%%%%%%%%%%%%%%%%%%%%
\begin{figure}[tb]
\centerline{\includegraphics[width=0.95\linewidth]{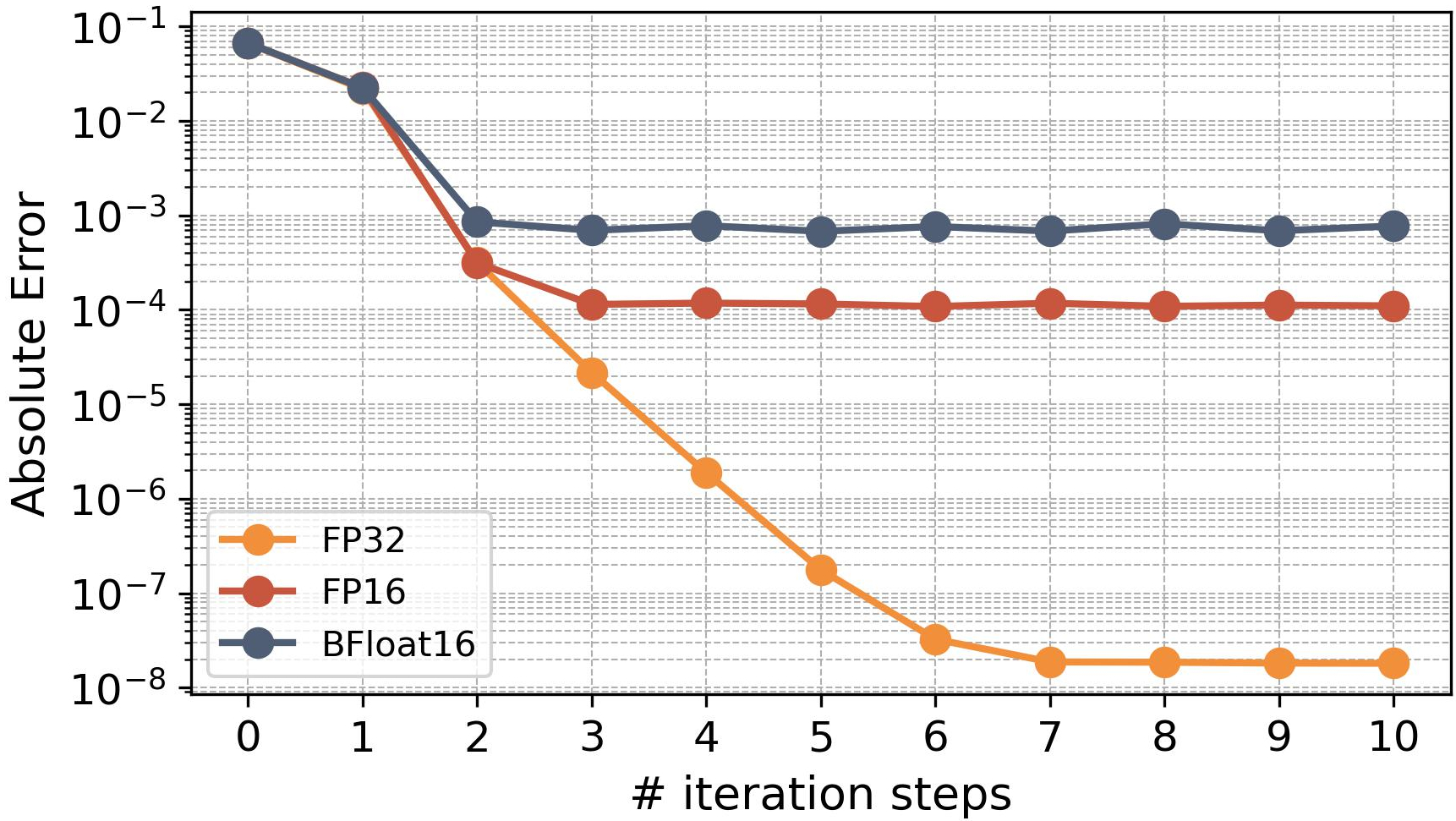}}
\caption{Average absolute errors of IterL2Norm in FP32, FP16, and BFloat16 with the number of iteration steps.}\label{fig:convergence_rate}

% \caption{The average and maximum absolute errors according to the number of iteration steps for the input length $d=64$ and $d=1024$ over 1,000 input vectors in FP32 data format.}\label{fig:convergence_rate}
\end{figure}

\begin{table}[tb]
\caption{\label{tab:error_comparison_fisr} Precision comparison between IterL2Norm and FISR}
\begin{center}
\begin{adjustbox}{width=0.47\textwidth}
\begin{tabular}{ccccc}
\hline
\multirow{3}{*}{\textbf{\begin{tabular}[c]{@{}c@{}}Input\\ length\end{tabular}}} & \multicolumn{2}{c}{\textbf{FP32}} & \multicolumn{2}{c}{\textbf{BFloat16}} \\ \cline{2-5} 
 & \multicolumn{2}{c}{\textbf{Avg/Max Err$(\times10^{-4})$}} & \multicolumn{2}{c}{\textbf{Avg/Max Err$(\times10^{-3})$}} \\
 & \textbf{IterL2Norm} & \textbf{FISR} & \textbf{IterL2Norm} & \textbf{FISR} \\ \hline
768   & \textbf{0.132}/\textbf{29.35} & 4.124/101.6 & \textbf{2.195}/\textbf{125.0} & 2.294/125.0 \\
1024  & \textbf{1.987}/91.76 & 3.104/61.21 & 2.243/\textbf{125.0} & 2.235/125.0 \\
2048  & 61.76/3699.0 & 1.544/37.69 & 7.423/312.5 & 2.142/125.0 \\
2560  & \textbf{0.030}/\textbf{0.658} & 1.232/25.67 & \textbf{2.069}/\textbf{125.0} & 2.137/125.0 \\
4096  & 1.516/94.21 & 0.767/16.90 & \textbf{2.129}/\textbf{125.0} & 2.154/125.0 \\
5120  & \textbf{0.032}/\textbf{0.782} & 0.613/14.97 & \textbf{2.008}/\textbf{125.0} & 2.124/125.0 \\
7168  & 20.61/467.0 & 0.435/8.831 & 2.456/187.5 & 2.109/125.0 \\
9216  & \textbf{0.203}/14.98 & 0.337/8.736 & 2.160/\textbf{125.0} & 2.129/125.0 \\
12288 & \textbf{0.015}/\textbf{1.831} & 0.251/5.846 & \textbf{2.070}/\textbf{125.0} & 2.185/125.0 \\ \hline
\end{tabular}
\end{adjustbox}
\end{center}
\end{table}

To identify the covergence rate, we measured the average absolute error by varying the numbers of iteration steps for $d=1024$ for FP32, FP16, and BFloat16. Fig.~\ref{fig:convergence_rate} plots the evaluation results, where each data point was acquired from 1,000 trials. IterL2Norm in FP16 and BFloat16 ensures fast convergence within five iteration steps while that in FP32 needs a few additional iteration steps until convergence. Nevertheless, the error after five steps is close to the steady state error and far below the steady state errors for FP16 and BFloat16. Note that, in all formats, these errors after five steps may be sufficiently low to avoid LLM-level performance degradation on some text generation tasks as addressed in Sec.~\ref{sec:llm_eval}.     

\subsection{Operational latency}
%%%%%%%%%%%%%%%%%%%%%%%%%%%%%%%%%%%%%%%%%%%%%%%%%%%%%%%%%%%%%%%%%%%%%%%%%%%%
\begin{figure}[tb]
% \centerline{\includegraphics[width=0.95\linewidth]{}}
\centerline{\includegraphics[width=0.95\linewidth]{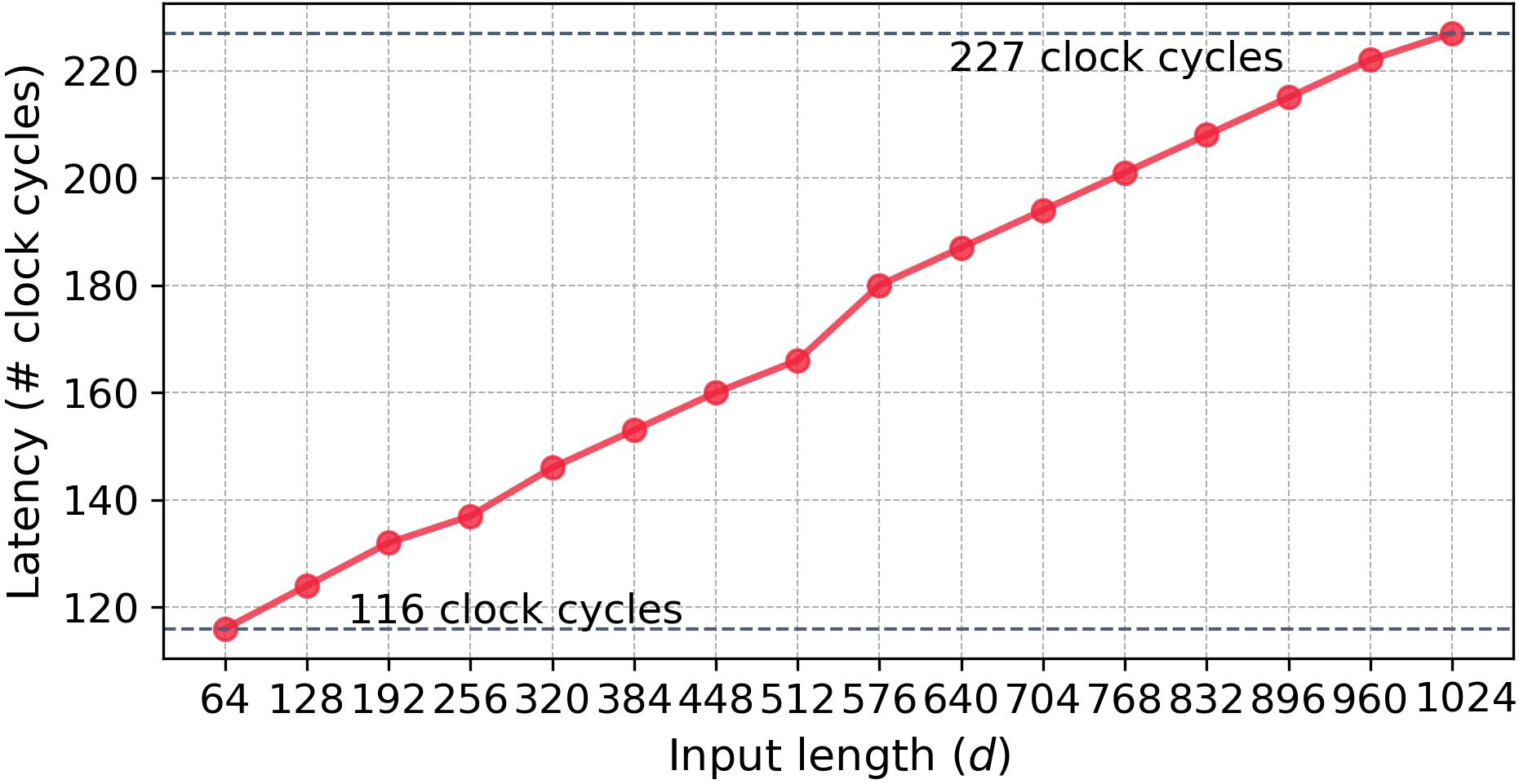}}
\caption{Measured latency of IterL2Norm (five iteration steps) with input length $d$.}
\label{fig:macro_latency}
\end{figure}
%%%%%%%%%%%%%%%%%%%%%%%%%%%%%%%%%%%%%%%%%%%%%%%%%%%%%%%%%%%%%%%%%%%%%%%%%%%%

We evaluated the IterL2Norm macro latency by varying input length $d$. The results are plotted in Fig.~\ref{fig:macro_latency}. Note that the latency does not rely on the data format. The latency scales with the number of chunks $\lceil d/\left(n_bw_b\right)\rceil$ of the input length $d$. This is because the major steps addressed in Sec.~\ref{sec:macro_design} work on a chunk of $n_bw_b(=64)$ input elements at a time, so that the latency scales with $\lceil d/\left(n_bw_b\right)\rceil$. They include the mean calculation, mean-shift operation, inner product of $\boldsymbol{y}$ with itself, and scale and shift operations.

\subsection{IterL2Norm macro in 32/28nm CMOS}
%%%%%%%%%%%%%%%%%%%%%%%%%%%%%%%%%%%%%%%%%%%%%%%%%%%%%%%%%%%%%%%%%%%%%%%%%%%%
\begin{table}[tb]
\caption{\label{tab:macro_synthesis}Synthesis results for the IterL2Norm macros in FP32, FP16, and BFloat16}
\begin{center}
\begin{adjustbox}{width=0.47\textwidth}
\begin{tabular}{ccccc}
\hline
\textbf{Format} & \textbf{Memory size} & \textbf{\# cells} & \textbf{Area} & \textbf{Power} \\ \hline
FP32 & 96.5 kib & 269.3k & 2.4 $(1.7)^{\dag}$ $\textrm{mm}^2$ & 22.9 mW \\
FP16 & 48.3 kib & 100.1k & 1.1 $(0.8)^{\dag}$ $\textrm{mm}^2$ & 8.4 mW \\
BFloat16 & 48.3 kib & 87.0k & 1.0 $(0.8)^{\dag}$ $\textrm{mm}^2$ & 7.3 mW \\ \hline
\end{tabular}
\end{adjustbox}
\begin{tablenotes}\small
\item[]{$\dag$: Area without the Add and Mul blocks.}\\
\end{tablenotes}
\end{center}
\end{table}
%%%%%%%%%%%%%%%%%%%%%%%%%%%%%%%%%%%%%%%%%%%%%%%%%%%%%%%%%%%%%%%%%%%%%%%%%%%%

%%%%%%%%%%%%%%%%%%%%%%%%%%%%%%%%%%%%%%%%%%%%%%%%%%%%%%%%%%%%%%%%%%%%%%%%%%%%
\begin{figure}[tb]
\centerline{\includegraphics[width=0.95\linewidth]{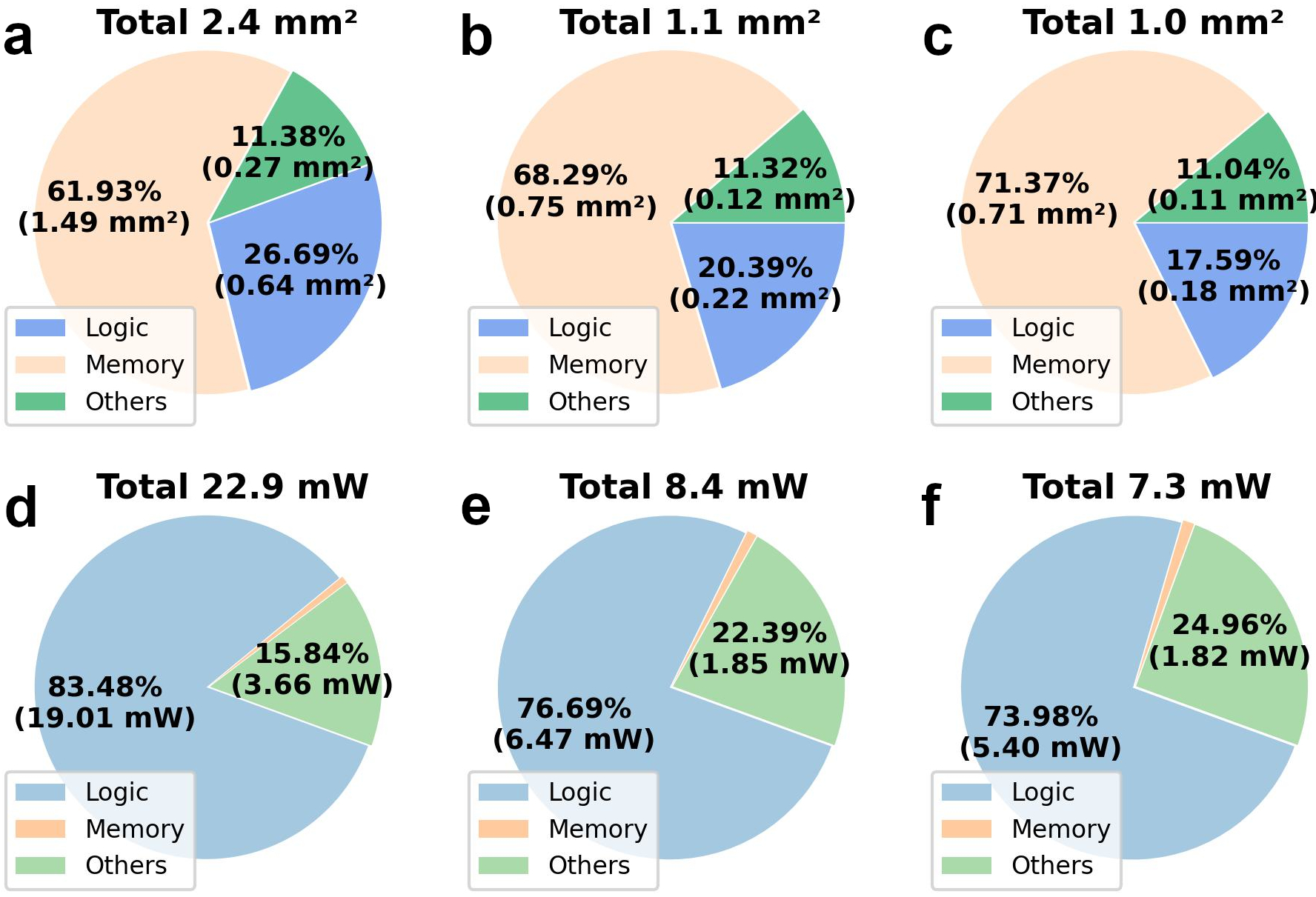}}
\caption{Area breakdowns for the IterL2Norm macro for (\textbf{a}) FP32, (\textbf{b}) FP16, and (\textbf{c}) BFloat16. Power breakdowns for (\textbf{d}) FP32, (\textbf{e}) FP16, and (\textbf{f}) BFloat16.}\label{fig:breakdown}
\end{figure}

\begin{table*}[tb!]
\caption{\label{tab:comparison}Comparison between the IterL2Norm macro and previous implementations of layer normalization}
\begin{center}
\begin{adjustbox}{width=0.91\textwidth}
\begin{tabular}{cccccccc}
\hline
 & \textbf{Implementation} & \textbf{Method} & \textbf{Operations} & \textbf{Data format} & \textbf{Area} & \textbf{Power} & \textbf{Clock frequency} \\ \hline
% \cite{kim2021bert} & - & approximate SQRT & \begin{tabular}[c]{@{}c@{}}addition, division, bit shift\end{tabular} & INT32 & - & - & - \\ \hline
\cite{marchisio2023swifttron} & 65nm CMOS & approximate SQRT & \begin{tabular}[c]{@{}c@{}} addition, division, bit shift\end{tabular} & INT32 & 68.3 $\textrm{mm}^2$ & 2.0 W & 143 MHz \\ \hline
\cite{yu2022nn} & 7nm CMOS & approximate 1/SQRT & \begin{tabular}[c]{@{}c@{}}multiplication, addition\end{tabular} & \begin{tabular}[c]{@{}c@{}}INT32\\ FP32\\ FP16\end{tabular} & \begin{tabular}[c]{@{}c@{}}1008.9 ${\mu}m^2$\\ 1133.6 ${\mu}m^2$\\ 498.4 ${\mu}m^2$\end{tabular} & \begin{tabular}[c]{@{}c@{}}59.1 $\mu$W\\ 43.7 $\mu$W\\ 25.0 $\mu$W\end{tabular} & - \\ \hline
\cite{wu2024pim} & 28nm CMOS & FISR & \begin{tabular}[c]{@{}c@{}}multiplication, addition, bit shift\end{tabular} & BFloat16 & - & - & 1 GHz \\ \hline
\cite{wang2023sole} & 28nm CMOS & \begin{tabular}[c]{@{}c@{}}layer normalization\\ w/ dynamic compress\end{tabular} & \begin{tabular}[c]{@{}c@{}}multiplication, addition, bit shift\end{tabular} & INT8 & - & - & 1 GHz \\ \hline
\textbf{Ours} & \textbf{32/28nm CMOS} & \textbf{IterL2Norm} & \textbf{\begin{tabular}[c]{@{}c@{}}multiplication, addition\end{tabular}} & \textbf{\begin{tabular}[c]{@{}c@{}}FP32\\ FP16\\ BFloat16\end{tabular}} & \textbf{\begin{tabular}[c]{@{}c@{}}2.4 (1.7)$^{\dag}$ $\textrm{mm}^2$\\ 1.1 (0.8)$^{\dag}$ $\textrm{mm}^2$\\ 1.0 (0.8)$^{\dag}$ $\textrm{mm}^2$\end{tabular}} & \textbf{\begin{tabular}[c]{@{}c@{}}22.9 mW\\ 8.4 mW\\ 7.3 mW\end{tabular}} & \textbf{100 MHz} \\ \hline
\end{tabular}
\end{adjustbox}
\begin{tablenotes}\small
\item[]{$\dag$: Area without the Add and Mul blocks.}\\
\end{tablenotes}
\end{center}
\end{table*}
%%%%%%%%%%%%%%%%%%%%%%%%%%%%%%%%%%%%%%%%%%%%%%%%%%%%%%%%%%%%%%%%%%%%%%%%%%%%

We finally synthesized the IterL2Norm macro for each of FP32/FP16/BFloat16 using the Synopsys SAED 32/28nm technology PDK with a supply voltage of 1.05V and a clock frequency $f_\text{clk}$ of 100MHz. We used the Design Compiler V-2023.12-SP5. Note that the configuration of Input buffer banks for all formats follows the generic architecture (use of eight Input buffer banks, each of which stores $16\times 8$ input elements) explained in Sec.~\ref{sec:macro_design}. The synthesis results are summarized in Table~\ref{tab:macro_synthesis}. As such, the macro for FP32 needs on-chip memory (96.5 kib) twice as large as those (48.3 kib) for FP16 and BFloat16. 
For FP32, each of the Input, $\gamma$, and $\beta$ buffers uses 32 kib to store 1024 elements maximally, and the partial sum buffer uses 0.5 kib to maximally store 16 partial sums. For FP16 and BFloat16, the memory usage is half that for FP32 such that the Input, $\gamma$, and $\beta$ buffers use 48 kib in total, and the partial sum buffer 0.25 kib. 

The number of standard cells used is primarily determined by the FP multipliers and adders. As such, among the three formats, the FP32 multiplier and adder require the most standard cells
due to their higher number of exponent and mantissa bits. The BFloat16 multiplier and adder require a fewer standard cells than FP16 because of their lower number of mantissa bits that are subject to multiplication and addition.

As shown in Table~\ref{tab:macro_synthesis}, the macro areas for FP32, FP16, and BFloat16 are 2.4, 1.1, and 1.0 $\textrm{mm}^2$, respectively. The area breakdown for each format is shown in Figs.\ref{fig:breakdown}\textbf{a}-\textbf{c}. For all formats, the memory (Input/$\gamma$/$\beta$ and partial sum buffers) occupies the largest area in the macro, which is followed by the logic area including FP multipliers and adders. Although we considered FP multipliers and adders dedicated solely to IterL2Norm, IterL2Norm can use them in the MatMul block co-integrated on the same die. Therefore, the actual area of the IterL2Norm macro likely excludes the multiplier and adder areas, which is also listed in Table~\ref{tab:macro_synthesis}.   
The operational power is also primarily determined by the FP multipliers and adders, resulting in the highest (lowest) power consumption for FP32 (BFloat16) as identified in Table~\ref{tab:macro_synthesis} and power breakdown for each format in Figs.\ref{fig:breakdown}\textbf{d}-\textbf{f}.

\subsection{LLM-level evaluation}\label{sec:llm_eval}

\begin{table}[tb]
\caption{\label{tab:llm_results} LLM-level evaluation of IterL2Norm using OPT-125M and OPT-350M on two text generation tasks}
\begin{center}
\small
\begin{adjustbox}{width=0.47\textwidth}
\begin{tabular}{cccccccc}
\hline
\multirow{2}{*}{\textbf{Task}} & \multirow{2}{*}{\textbf{Format}} & \multicolumn{3}{c}{\textbf{OPT-125M}} & \multicolumn{3}{c}{\textbf{OPT-350M}} \\ \cline{3-8} 
 &  & \textbf{Baseline} & \textbf{\# steps} & \textbf{Perplexity} & \textbf{Baseline} & \textbf{\# steps} & \textbf{Perplexity} \\ \hline
\multirow{12}{*}{Wikitext-2} & \multirow{4}{*}{FP32} & \multirow{4}{*}{18.21} & 3 & 18.37 (+0.16) & \multirow{4}{*}{15.28} & 3 & 15.28 (+0.00) \\
 &  &  & 4 & 18.22 (+0.01) &  & 4 & 15.28 (+0.00) \\
 &  &  & 5 & 18.21 (+0.00) &  & 5 & 15.28 (+0.00) \\
 &  &  & 10 & 18.21 (+0.00) &  & 10 & 15.28 (+0.00) \\ \cline{2-8} 
 & \multirow{4}{*}{FP16} & \multirow{4}{*}{25.35} & 3 & 25.51 (+0.16) & \multirow{4}{*}{27.57} & 3 & 27.57 (+0.00) \\
 &  &  & 4 & 25.35 (+0.00) &  & 4 & 27.57 (+0.00) \\
 &  &  & 5 & 25.35 (+0.00) &  & 5 & 27.57 (+0.00) \\
 &  &  & 10 & 25.35 (+0.00) &  & 10 & 27.57 (+0.00) \\ \cline{2-8} 
 & \multirow{4}{*}{BFloat16} & \multirow{4}{*}{19.17} & 3 & 19.43 (+0.26) & \multirow{4}{*}{15.43} & 3 & 15.44 (+0.01) \\
 &  &  & 4 & 19.20 (+0.03) &  & 4 & 15.43 (+0.00) \\
 &  &  & 5 & 19.20 (+0.03) &  & 5 & 15.43 (+0.00) \\
 &  &  & 10 & 19.17 (+0.00) &  & 10 & 15.43 (+0.00) \\ \hline
\multirow{12}{*}{BST} & \multirow{4}{*}{FP32} & \multirow{4}{*}{17.30} & 3 & 17.36 (+0.06) & \multirow{4}{*}{15.41} & 3 & 15.41 (+0.00) \\
 &  &  & 4 & 17.31 (+0.01) &  & 4 & 15.41 (+0.00) \\
 &  &  & 5 & 17.30 (+0.00) &  & 5 & 15.41 (+0.00) \\
 &  &  & 10 & 17.30 (+0.00) &  & 10 & 15.41 (+0.00) \\ \cline{2-8} 
 & \multirow{4}{*}{FP16} & \multirow{4}{*}{19.61} & 3 & 19.67 (+0.16) & \multirow{4}{*}{21.94} & 3 & 21.95 (+0.01) \\
 &  &  & 4 & 19.61 (+0.00) &  & 4 & 21.94 (+0.00) \\
 &  &  & 5 & 19.61 (+0.00) &  & 5 & 21.94 (+0.00) \\
 &  &  & 10 & 19.61 (+0.00) &  & 10 & 21.94 (+0.00) \\ \cline{2-8} 
 & \multirow{4}{*}{BFloat16} & \multirow{4}{*}{17.83} & 3 & 17.91 (+0.08) & \multirow{4}{*}{15.49} & 3 & 15.49 (+0.00) \\
 &  &  & 4 & 17.84 (+0.01) &  & 4 & 15.49 (+0.00) \\
 &  &  & 5 & 17.84 (+0.01) &  & 5 & 15.49 (+0.00) \\
 &  &  & 10 & 17.83 (+0.00) &  & 10 & 15.49 (+0.00) \\ \hline
\end{tabular}
\end{adjustbox}
\end{center}
\end{table}

We evaluated IterL2Norm at the LLM-level by creating a PyTorch module for IterL2Norm. Note that the precision of IterL2Norm in PyTorch negligibly differs from that in FPGA. We considered the Open Pre-trained Transformer (OPT) models with 125M (OPT-125M) and 350M (OPT-350M) parameters~\cite{zhang2022opt} for text generation. The OPT models are decoder-only models consisting of stacks of 12 and 24 transformer blocks, each employing 12 and 16 attention heads with embedding sizes of 768 and 1024, respectively. We used two text generation datasets: WikiText-2~\cite{merity2016pointer} and Blended Skill Talk (BST)~\cite{smith2020can}. All layer normalization blocks in the pre-trained OPT-125M and OPT-350M models for each dataset were replaced with IterL2Norm, and we measured the change in perplexity scores for text generation as the LLM-level error of IterL2Norm. The IterL2Norm module takes the number of iteration steps $n_\text{iter}$ as a parameter. 
The perplexity scores for different iteration steps in FP32, FP16, and BFloat16 are listed in Table~\ref{tab:llm_results}. Compared to the baseline perplexity, the perplexity scores for both WikiText-2 and BST marginally increased after the third iteration step.

\section{Related work and comparison}\label{sec:comp}
Our IterL2Nom macros designed for FP32, FP16, and BFloat16 are compared with previous implementations of layer normalization in Table~\ref{tab:comparison}.  
The method in \cite{marchisio2023swifttron} realizes the layer normalization of INT32 vectors using integer-only arithmetic. To this end, they adopted an iterative algorithm~\cite{crandall2005prime} that approximates square root values. Thus, this method requires additional division operations to normalize the input integer vector. When implemented in 65nm CMOS, the area and power overheads of the circuit are 68.3 mm$^{2}$ and 2.0 W, respectively.

The method in~\cite{yu2022nn} avoids costly division operations for normalization by using a lookup-table (LUT)-based approximation of the inverse square root function. The inverse square root function is approximated using a piecewise linear method, and the function values for multiple inputs are stored in an LUT. For a given input, its square root value is calculated by interpolating between two neighboring function values. Wu et al. implemented FISR~\cite{fast_inverse_square_root} in BFloat16 in 28nm CMOS technology for on-chip layer normalization~\cite{wu2024pim}. Unfortunately, the detailed implementation and performance data are unavailable. 

The method in~\cite{wang2023sole} uses the low-precision computation of the mean and standard deviation using dynamic compression and power-of-two factor quantization methods. The mean and standard deviation values are computed using 4-bit integer arithmetic. Additional LUTs are used to store inverse square root values. However, similar to~\cite{wu2024pim}, the implementation and performance data are missing.

\section{Conclusion}\label{sec:conclusion}
We introduced IterL2Norm, an efficient method for iteratively L2-normalizing input vectors without costly division or square root operations. Grounded in solid theory, IterL2Norm is applicable to general FP data and ensures high precision, outperforming FISR in six out of nine cases for FP32 and five out of nine for BFloat16 across the embedding lengths used in the OPT models. It also converges quickly, reaching its fixed point within five iterations. Implemented in 32/28nm CMOS technology, the IterL2Norm macro processes $d$-dimensional input vectors, where $64 \leq d \leq 1024$, with a latency of 116-227 clock cycles.

% \section*{Acknowledgment}

\bibliography{Bib.bib}

\end{document}